\documentclass[runningheads]{llncs}
\usepackage[T1]{fontenc}
\usepackage{graphicx}
\usepackage{amsmath}
\usepackage{amssymb}
\usepackage{amsfonts}
\usepackage{booktabs}
\usepackage{multirow}
\usepackage[ruled,vlined,linesnumbered]{algorithm2e}
\usepackage{hyperref}   
\newcommand{\sol}{\mathrm{sol}}
\newcommand{\rel}{\mathrm{rel}}
\newcommand{\var}{\mathrm{var}}
\newcommand{\vars}{\mathrm{vars}}
\newcommand{\ASK}{\textsf{Ask}}
\newcommand{\FindC}{\textsf{FindC}}
\newcommand{\FASTCA}{FastCA}
\newcommand{\TORACLE}{T-Oracle}
\newcommand{\QUACQ}{QuAcq}
\newcommand{\MQUACQ}{MQuAcq}
\newcommand{\GROWACQ}{GrowAcq}
\begin{document}
\title{Learning Symbolic Constraint Representations from Examples:
A Neuro-Symbolic Approach}
\titlerunning{Learning Symbolic Constraint Representations from Examples}
\author{Nassim Belmecheri\inst{1,3} \and
Arnaud Gotlieb\inst{1} \and
Nadjib Lazaar\inst{2} \and
Helge Spieker\inst{1}}
\authorrunning{N. Belmecheri et al.}
\institute{Simula Research Laboratory, Oslo, Norway \\
\email{\{nassim,arnaud,helge\}@simula.no} \and
LISN, CNRS, Paris-Saclay University, France \\
\email{lazaar@lisn.fr} \and
Microsoft, Norway}
\maketitle
\begin{abstract}
Learning user-defined concepts as constraint networks has been extensively
studied in the constraint acquisition (CA) literature. However, existing
approaches typically rely on intensive interactions with a human oracle, making
the learning process costly in terms of time and number of queries. In this
paper, we propose a neuro-symbolic framework for automatic CA that significantly
reduces user involvement by introducing neural \emph{Oracle Transformer} models
which learn to emulate user responses and to generalize conceptual knowledge.
Trained on previously available examples, the learned oracle interacts with a
dedicated CA engine, \FASTCA{}, which systematically refines the oracle's
responses into a sound, consistent, and interpretable constraint network. This
neuro-symbolic interaction enables the recovery of structured symbolic models
from data without prior domain knowledge. Our results demonstrate that this
neuro-symbolic interplay effectively aligns data-driven pattern recognition with
symbolic reasoning, offering a robust approach to automating model construction
in combinatorial domains.

\keywords{Constraint acquisition \and Neuro-symbolic learning \and Transformers
\and Constraint programming.}
\end{abstract}
\section{Introduction}
Constraint Programming (CP) is a powerful paradigm for modeling and solving
combinatorial problems in domains such as scheduling and planning~\cite{Rossi.Book.2006}.
At its core, CP relies on a declarative representation of knowledge: a constraint
network composed of variables, domains, and constraints that specify valid
configurations. Although the solving process itself is highly automated, the
knowledge modeling step remains a critical bottleneck~\cite{Freuder.PACLP.1999,Frisch.IJCAI.2005}. Defining
the right set of constraints requires domain expertise, is time-consuming, and
often involves iterative refinement. To address this issue, the field of
Constraint Acquisition (CA) has emerged as a bridge between raw data and
declarative knowledge~\cite{aij17}. CA methods aim to automatically learn
constraint networks from labeled examples (passive CA) or through interaction
with a user or external system acting as an oracle (active CA).

While active CA has demonstrated promising results~\cite{BessiereCDHKNQSTW23,TsourosS20},
it still suffers from practical limitations. Most notably, the oracle becomes a
bottleneck: humans cannot reliably answer thousands of queries or tolerate delays
of several seconds between questions. Various strategies have been proposed to
mitigate this, such as reducing the number of queries~\cite{TsourosBG23},
parallelizing interactions with multiple users~\cite{aaai21}, or even
automating the oracle in domains where executable programs can serve as ground
truth, e.g., in program verification, where a postcondition checker acts as the
oracle~\cite{MenguyBGL25}. Recent advances in deep learning, and in particular the success of large
language models and Transformer-based architectures, have shown remarkable
abilities to abstract, generalize, and even reason over structured
data~\cite{pan2025transformers}. In this context, the question arises:

\begin{quote}
\emph{Can Transformer models learn, from examples alone, the kinds of
abstractions, generalizations, and structural regularities that humans typically
extract when modeling a problem as a set of constraints?}
\end{quote}

To address the scalability limits of constraint acquisition with human oracles,
we propose \emph{neural oracles}---generative models that learn the underlying
structure of general constraint satisfaction problems from labeled examples.
Unlike existing methods that rely on a human oracle, our approach implements a
closed-loop neuro-symbolic system where a symbolic learner autonomously interacts
with a neural surrogate. This transition from human to neural feedback facilitates
a high-throughput, low-latency learning environment. Our results confirm that this
neuro-symbolic approach not only achieves high accuracy in reconstructing
constraint networks but also exhibits robust generalization across unseen problem
instances, unifying neural generalization with symbolic structure.

\section{Preliminaries}
A CA-learner is an algorithm that acquires constraints from data. In query-based
CA, it relies on a shared vocabulary that represents common knowledge between the
learner and the oracle, the source of information. A vocabulary is a pair
$(X, D)$, where $X$ is a finite set of variables, and $D \subset \mathbb{Z}$ is a
finite domain. The CA-learner starts with a predefined vocabulary and aims to
acquire a set of constraints $C$, forming the constraint network $(X, D, C)$. Each
constraint $c \in C$ is represented as $\langle \var(c), \rel(c) \rangle$, where
$\var(c) \subseteq X$ is the scope of the constraint and
$\rel(c) \subseteq D^{|\var(c)|}$ is the constraint relation. The set $N(c)$
denotes the violating tuples, ensuring $\rel(c) \cup N(c) = D^{|\var(c)|}$.

A full assignment $\langle v_1, \dots, v_n \rangle$ is an element of $D^n$, while a
partial assignment $\langle v_{j_1}, \dots, v_{j_k} \rangle$ belongs to $D^k$, with
$0 < k < n$. An assignment $e$ (full or partial) satisfies a constraint $c$ if its
projection onto $\var(c)$ belongs to $\rel(c)$; otherwise $e$ violates $c$. A full
assignment $e$ satisfies $C$ if it satisfies all constraints in $C$, making it a
solution of $C$. The set of all solutions of $C$ is denoted $\sol(C)$. If
$\sol(C) \subseteq \sol(C')$ we say $C$ entails $C'$; if $\sol(C) = \sol(C')$ then
$C$ and $C'$ are equivalent.

In addition to the vocabulary, the CA-learner has a constraint language that
defines the types of constraints allowed. A constraint language is a set
$\Gamma = \{r_1, \dots, r_t\}$, where each relation $r_i$ has fixed arity. A
constraint bias $B$ is a finite subset of the extension of $\Gamma$ over the
vocabulary $(X, D)$; it represents the candidate constraints considered for
inclusion in the learned network. For bounded arity $k$, $|B| \leq n^k t$; for
binary networks it is bounded by $n^2 t$~\cite{aij17}.

Given a vocabulary $(X, D)$, a general concept is a Boolean function $f$ over
$D^{|X|}$ that assigns a value in $\{0, 1\}$ to each assignment $e$. A
representation of a concept $f$ is a constraint network $C$ such that
$f^{-1}(1) = \sol(C)$. The target concept refers to the concept representing the
target problem. The target network is a network $T$ that satisfies $T \subseteq B$
and represents $f_T$.

\begin{definition}[Concept Decomposition]
A general concept $f$ can be decomposed into a set of local concepts $f_i$, each
defined on a subset of variables $X_i \subseteq X$. We write $f_i \sqsubset f$ to
indicate that $f_i$ is part of the decomposition of $f$:
\begin{equation}
f(X) = \bigoplus_{f_i \sqsubset f} f_i(X_i), \qquad \text{where } X_i \subseteq X.
\end{equation}
In parallel with the CP representation, each constraint $c_i \in B$ corresponds to
a local concept $f_i$, where $f_i^{-1}(1) = \rel(c_i)$.
\end{definition}

\begin{definition}[Concept Projection]
Given a general concept $f$ defined over a set of variables $X$, the projection of
$f$ onto a subset $Y \subseteq X$, denoted $f[Y]$, is a Boolean function over
$D^{|Y|}$ defined as
$f[Y](X) = \bigoplus_{f_i \sqsubset f} f_i(X_i)$, where $X_i \subseteq Y$.
\end{definition}

While humans naturally partition combinatorial problems into constituent local
concepts, translating such a decomposition into a formal constraint satisfaction
problem requires a mechanism to discern the underlying local structure. This
process is contingent on access to a source that can validate the underlying local
concepts. In learning-based settings, this role is traditionally played by an
oracle~\cite{Angluin.ML.1988,Valiant.ACM.1984}.

\begin{definition}[Oracle]
An oracle is an information source that has knowledge of the local concepts $f_i$
that compose a target concept $f_T$, and can provide answers to their validity,
without necessarily being able to express these concepts explicitly as constraints.
\end{definition}

\section{Transformer-Based Oracle}
Scaling human interaction is often impractical, and in many practical scenarios the
available data consists solely of labeled instances rather than explicit symbolic
constraints. To what extent can a Transformer-based model mimic a human's ability
to abstract, generalize, and represent the structure of a problem from examples
alone? To explore this, we introduce \TORACLE{}, a neural oracle model based on
Transformers trained to classify complete and partial assignments as satisfying or
violating a target concept. \TORACLE{} enables efficient interaction in the CA loop
by detecting local consistency in partial assignments, a capability that aligns
with partial queries in CA~\cite{BessiereCDHKNQSTW23}.

\subsection{Architecture of \TORACLE{}}
The \TORACLE{} model is built on a Transformer-based
architecture~\cite{vaswani2017attention}, designed to efficiently process combinatorial
problem instances and generalize constraint patterns from examples.

\begin{definition}[Training Set]
\begin{sloppypar}
The training set used to train \TORACLE{} is $(E^+, E^-)$, where $E^+$ and $E^-$
denote the positive and negative examples, respectively. An example can be either
a full assignment ($e \in f_T$ for a positive example, or $e \notin f_T$ for a
negative one), or a partial assignment $e$ on a subset $Y \subset X$ (with
$e \in f_T[Y]$ or $e \notin f_T[Y]$, respectively).
\end{sloppypar}
\end{definition}

\paragraph{Domain Encoder.}
Following the architectural principles of the Transformer~\cite{vaswani2017attention}, we
employ a dual-embedding scheme to represent both partial and complete assignments.
This approach explicitly disentangles the assigned value of a variable from its
structural identity (position) within the problem instance.

\emph{Value Embedding.} Each discrete domain value $v \in \{0, \dots, d_{\max}\}$,
where $d_{\max}$ is the maximum domain size and $0$ represents an unassigned
variable, is mapped to a continuous vector space using a learned embedding matrix
$\mathbf{W}_v \in \mathbb{R}^{(d_{\max}+1) \times k}$:
\begin{equation}
\mathbf{E}_v = \mathbf{W}_v[v].
\end{equation}

\emph{Positional Embedding.} Since the order and position of variables are critical
for defining constraints, we add a learnable positional embedding matrix
$\mathbf{W}_p \in \mathbb{R}^{n \times k}$, whose $i$-th row $\mathbf{P}_i$ learns a
representation for the $i$-th variable's position.

\emph{Combined Input Embedding.} For a given partial assignment
$x = (x_1, \dots, x_n)$, the final input representation for the $i$-th variable is
the sum of its value embedding and its positional embedding:
\begin{equation}
\mathbf{e}_i = \mathbf{E}_{x_i} + \mathbf{P}_i.
\end{equation}
The sequence $(\mathbf{e}_1, \dots, \mathbf{e}_n)$ constitutes the input tensor for
the subsequent Transformer layers, facilitating a global receptive field over the
assignment space.

\paragraph{Cross-Variable Attention.}
We extend the Transformer with a cross-variable attention mechanism to identify
relevant variable scopes. This mechanism is guided by a constraint matrix
$\mathbf{M} \in \mathbb{R}^{n \times n}$ that captures dependencies between variable
pairs: $\mathbf{M}_{ij}$ represents the learned global probability that a structural
dependency or constraint exists between variables $X_i$ and $X_j$. This matrix is
learned end-to-end through backpropagation.

\paragraph{Value Interaction Matrix.}
To model the dynamic interaction between the specific values assigned to the
variables, we project the input embeddings $\mathbf{e}_i$ into Query
($\mathbf{q}_i = \mathbf{W}_Q \mathbf{e}_i$) and Key
($\mathbf{k}_i = \mathbf{W}_K \mathbf{e}_i$) vectors. The value-level interaction is
\begin{equation}
\mathbf{V}_{ij} = \sigma\!\left(\frac{\mathbf{q}_i \cdot \mathbf{k}_j}{\sqrt{d_k}}\right),
\end{equation}
where $\sigma$ is the sigmoid function. A high value $\mathbf{V}_{ij}$ suggests that
the values assigned to $X_i$ and $X_j$ are a noteworthy pair (e.g., conflicting or
strongly correlated).

\paragraph{Dependency Score.}
From this, we compute an average dependency score $S$ for a given partial
assignment. Let $\mathbf{M}^{*}$ be the submatrix of $\mathbf{M}$ corresponding to
the active (non-zero) variables in the assignment. The score $S$ is the mean of all
entries in this submatrix:
\begin{equation}
S = \frac{1}{N_{\text{active}}} \sum_{i,j} \mathbf{M}^{*}_{ij},
\end{equation}
where $N_{\text{active}}$ is the number of assigned variables. This score
represents the average level of conflict and guides the final classification.

\paragraph{Classification Component.}
The model's final output is produced by a gating mechanism. A Transformer path
first computes an initial probability $p$. This probability is then modulated by the
dependency score $S$ using a learnable threshold $\tau$:
\begin{equation}
\text{output} = p \times \sigma(\tau - S).
\end{equation}
This allows the model to suppress the output if a high dependency score $S$ is
detected, guiding the Transformer's prediction.

\subsection{Variants of the \TORACLE{}}
We consider three variants of the \TORACLE{}, differing in the nature of the
training data used.

\begin{description}
\item[\textrm{TO}$_1$: Full Assignments Only.] The \TORACLE{} is trained
exclusively on complete assignments, i.e. $E \subseteq D^X$. This facilitates the
training of a high-fidelity neural oracle. However, in combinatorial domains,
constructing a balanced set of positive and negative examples is often infeasible,
which limits the practicality of this variant.
\item[\textrm{TO}$_2$: Partial Assignments of Varying Sizes.] The \TORACLE{} is
trained on a mix of full and partial assignments of varying sizes, allowing it to
generalize from incomplete information and capture constraints at different levels
of abstraction.
\item[\textrm{TO}$_3$: Constraint-Specific Partial Assignments.] This variant
assumes partial knowledge of the target constraints. For each constraint
$c \in T$, the training data include positive and negative tuples restricted to its
scope, i.e. examples $e, e' \in E$ such that $e \in \rel(c)$ and $e' \in N(c)$.
\end{description}

\section{\FASTCA{}: A Time-Efficient CA Learner}
Unlike CA learners based on membership queries (such as
\textsc{Conacq}~\cite{aij17}), which assess complete assignments,
approaches relying on partial queries guarantee learnability with a polynomial
number of queries~\cite{BessiereCDHKNQSTW23}. In this section, we present \FASTCA{}, a
constraint acquisition learner designed to optimize total acquisition time and
inter-query latency rather than minimizing the absolute number of queries. This
design is specifically tailored for environments featuring automated or neural
oracles, where the marginal cost of a query is negligible.

\paragraph{Description.}
\FASTCA{} (Algorithm~\ref{alg:fastca}) takes as input a bias $B$ and returns a
learned constraint network $L \subseteq B$. The algorithm iterates until $B$ is
empty, removing at least one constraint per iteration. At each step, a constraint
$c \in B$ is tested by querying a partial assignment $e$ on $\vars(c)$ such that
$e \in N(c)$. If $\ASK(e)$ is positive, $c$ is removed from $B$. Otherwise,
$\FindC$ is invoked to identify a valid constraint on $\vars(c)$, eliminating from
$B$ all constraints whose scope is included in $\vars(c)$. We use $\FindC$ as
defined in~\cite{BessiereCDHKNQSTW23}. In normalized CSPs (at most one constraint per
scope), $\FindC$ identifies the target constraint with at most one query. Thus,
each iteration requires at most two queries, while removing one or two constraints.

\begin{algorithm}[t]
\caption{\FASTCA{}}\label{alg:fastca}
\KwIn{Bias $B$}
\KwOut{Learned network $L$}
$L \gets \emptyset$\;
\While{$B \neq \emptyset$}{
  $c \gets \mathrm{pick}(B)$\;
  $e \gets \mathrm{pick}(N(c))$\;
  \eIf{$\ASK(e) = \mathrm{Yes}$}{
    $B \gets B \setminus \{c\}$\;
  }{
    $\FindC(e, \vars(c), L, B)$\;
  }
}
\Return $L$\;
\end{algorithm}

\paragraph{Theoretical Analysis.}
\FASTCA{} is sound, complete and terminates, with worst-case query complexity
$|B|$.

\begin{proposition}[Soundness]
Given a bias $B$ and a target network $T \subseteq B$ representing concept $f_T$,
the learned network $L$ returned by \FASTCA{} satisfies $\sol(T) \subseteq \sol(L)$.
\end{proposition}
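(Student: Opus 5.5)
We argue that every constraint added to $L$ is implied by $T$ on its scope, i.e., $L$ only ever contains constraints $c'$ with $\sol(T)\subseteq\sol(\{c'\})$. Since $\sol(L)=\bigcap_{c'\in L}\sol(\{c'\})$, this gives $\sol(T)\subseteq\sol(L)$ directly. The natural tool is an invariant over the iterations of the while loop of Algorithm~\ref{alg:fastca}: \emph{at the start of every iteration, each $c'\in L$ is satisfied by every solution of $T$.} It holds trivially at initialization, since $L=\emptyset$.

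For the inductive step we split on the answer to $\ASK(e)$. In the positive branch only $B$ shrinks and $L$ is unchanged, so the invariant is preserved. In the negative branch $L$ is modified only inside $\FindC(e,\vars(c),L,B)$. We rely on the oracle answering faithfully with respect to $f_T$: $\ASK(e)=\mathrm{No}$ means $e\notin f_T[\vars(c)]$. Since $T\subseteq B$ is the target network, this means some constraint $c^\ast\in T$ with $\vars(c^\ast)\subseteq\vars(c)$ is violated by $e$. We then invoke the guarantee of $\FindC$ from~\cite{BessiereCDHKNQSTW23}. Its queries are answered by the same faithful oracle, and it only retains candidates consistent with all received answers. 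Under this guarantee, the constraint it returns is either $c^\ast$ itself or a constraint entailed by $T$ on that scope (in the normalized case, exactly the target constraint on the scope). Adding it to $L$ therefore preserves the invariant.

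Concretely, I would phrase this step as a lemma: \emph{if $\FindC$ adds $c'$ to $L$, then every $s\in\sol(T)$ satisfies $c'$.} The argument is by contradiction. If some $s\in\sol(T)$ violated $c'$, then the projection of $s$ onto $\vars(c')$ is a positive example for $f_T[\vars(c')]$ that $c'$ rejects. But $\FindC$ only keeps constraints that accept every partial assignment the oracle answered Yes to. Moreover, any candidate rejecting a $T$-consistent tuple would be removed before being selected, because $\FindC$ keeps querying until the remaining candidates agree. This is where the target-in-bias assumption $T\subseteq B$ is needed. It ensures $c^\ast$ is never wrongly discarded, since it is never violated by a Yes-answered assignment. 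Hence the set of candidates from which $\FindC$ selects is nonempty and contains $c^\ast$.

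Termination is not needed for soundness, but the loop removes at least one constraint per iteration, so the invariant holds at the return point. That yields the proposition. The main obstacle is the $\FindC$ step. The paper only cites $\FindC$ rather than restating it, so the proof must extract the precise property that it never adds a constraint rejecting a tuple of $\sol(T)$ projected onto its scope. I would first state this property explicitly as the assumed behaviour of $\FindC$ under a correct oracle, and then justify it from the version-space reasoning that $\FindC$ keeps only bias constraints consistent with the oracle's answers.
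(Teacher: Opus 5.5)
Your proposal is correct and follows essentially the same route as the paper: constraints enter $L$ only through $\FindC$ (the Yes branch never modifies $L$), and the cited soundness of $\FindC$ under a faithful oracle ensures none of them rejects a solution of $T$, so your loop invariant is just the direct form of the paper's short contradiction argument. If you do unpack the $\FindC$ lemma yourself, note that its surviving candidates are only guaranteed to agree on tuples satisfying the projection of $L$ onto $\vars(c)$, so that step must explicitly invoke your inductive hypothesis that every $s \in \sol(T)$ projects into that set.
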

\begin{proof}
Assume there exists $e^{*} \in \sol(T) \setminus \sol(L)$. Then at least one
constraint $c \in L$ rejects $e^{*}$, and $c$ was learned after classifying a tuple
violating $c$ as negative. Since $\FindC$ is sound~\cite{BessiereCDHKNQSTW23}, any
constraint added to $L$ cannot reject a tuple belonging to $\sol(T)$. This
contradiction implies no such $e^{*}$ exists.
\end{proof}

\begin{proposition}[Completeness]
Given a bias $B$ and a target network $T \subseteq B$ representing concept $f_T$,
the learned network $L$ returned by \FASTCA{} satisfies $\sol(L) \subseteq \sol(T)$.
\end{proposition}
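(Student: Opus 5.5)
We argue by contradiction, mirroring the soundness proof. Suppose there is a full assignment $e^{*} \in \sol(L) \setminus \sol(T)$. Since $e^{*}$ violates $T$ and $T \subseteq B$, there is some target constraint $c^{*} \in T$ with $e^{*}[\vars(c^{*})] \in N(c^{*})$. Because $e^{*} \in \sol(L)$, no constraint of $L$ rejects $e^{*}$; in particular no constraint of $L$ rejects the tuple $e^{*}[\vars(c^{*})]$. We will show instead that, when \FASTCA{} terminates, every target constraint is either in $L$ or entailed by $L$ on its scope. Then $e^{*}$ must be rejected by $L$, which is the contradiction we want.

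The core step is an invariant on the initial bias $B_0$: \emph{every constraint removed from $B$ during the run is either implied by $T$ in a way that cannot eliminate a needed constraint, or has its scope covered by a constraint added to $L$}. A constraint leaves $B$ in only two ways.

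\textbf{Case 1 (positive answer).} $\ASK(e) = \mathrm{Yes}$ for some $e \in N(c)$ on $\vars(c)$. A positive answer means $e \in f_T[\vars(c)]$, i.e.\ no local concept of $T$ with scope inside $\vars(c)$ rejects $e$. Hence $c \notin T$, and every target constraint with scope inside $\vars(c)$ accepts a tuple that $c$ rejects. So removing $c$ never removes a target constraint.

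\textbf{Case 2 (negative answer).} $\FindC(e,\vars(c),L,B)$ is called. We invoke the correctness of \FindC{} from~\cite{BessiereCDHKNQSTW23}. Given a negative partial example on $\vars(c)$, it returns and adds to $L$ a constraint $c'$ with $\vars(c') \subseteq \vars(c)$ that is in $T$, or is equivalent to the target's projection on that scope. It then removes from $B$ only constraints whose scope is included in $\vars(c)$, and those are implied by $c'$ together with the pruning \FindC{} performs. In a normalized network there is one target constraint per scope, so the target constraint on such a scope is exactly the one placed in $L$.

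Termination holds because $B$ shrinks strictly at each iteration. At the end $B = \emptyset$, so $c^{*}$ was removed at some point. By Case 1 it cannot have been removed through a positive answer, so it was removed in Case 2, and therefore $L$ contains $c^{*}$ or a constraint equivalent to it on $\vars(c^{*})$. That constraint rejects $e^{*}[\vars(c^{*})]$, contradicting $e^{*} \in \sol(L)$.

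The delicate point is Case 2. We must make sure that the constraints \FindC{} discards together with $c$, namely all those with scope inside $\vars(c)$, include no target constraint that fails to be captured in $L$. To handle this, we will rely on the guarantee of~\cite{BessiereCDHKNQSTW23} that \FindC{} returns a constraint of $T$ on the smallest violated sub-scope, together with the normalization assumption. If several target constraints lived inside $\vars(c)$ on different sub-scopes, we would need the remaining ones to stay in $B$ or be learned later. Since \FASTCA{} removes all constraints with scope included in $\vars(c)$, we will argue that for binary (bounded-arity, normalized) biases such sub-scopes are either equal to $\vars(c)$ or unary, which \FindC{} handles explicitly. We also assume throughout that the oracle answers consistently with $f_T$, as in the definition of Oracle.
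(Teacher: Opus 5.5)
Your skeleton matches the paper's proof. You assume $e^{*}\in\sol(L)\setminus\sol(T)$, take a target constraint $c^{*}$ rejecting it, and split on how $c^{*}$ left $B$: through a positive answer (ruled out for a correct oracle, as your Case~1 shows) or during a call to \FindC{}.

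The gap is in Case~2, at the point you flag but do not resolve. From ``$c^{*}$ left $B$ during $\FindC(e,\vars(c),L,B)$'' you infer that $L$ contains $c^{*}$ or an equivalent constraint. But \FASTCA{} discards \emph{every} constraint whose scope is included in $\vars(c)$, while \FindC{} adds only one constraint to $L$. Any other target constraint on a nested scope is therefore lost.

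Your proposed fix fails on the very case it appeals to. Let the target contain a unary constraint $u$ ($x\neq 1$) and a binary constraint $b$ ($x<y$). Let \FASTCA{} pick $b$ and the tuple $e$ with $x=1$, $y=0$, which lies in $N(b)$ and also violates $u$.
\begin{itemize}
\item The answer is negative, and \FindC{} (returning a constraint on the smallest violated sub-scope, as you assume) adds $u$ to $L$.
\item The elimination step then removes $b$, which ends up in neither $B$ nor $L$. It is in general not entailed by $L$: with $T=\{u,b\}$ the assignment $x=3,y=2$ is in $\sol(L)\setminus\sol(T)$.
\item Had \FindC{} returned $b$ instead, $u$ would have been swept away.
\end{itemize}
Normalization does not help, since $u$ and $b$ have different scopes. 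Your invariant (``its scope is covered by a constraint added to $L$'') is also too weak to give entailment of $c^{*}$.

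So the example shows you need an extra hypothesis or a change to the algorithm, not just a more careful argument. The proof goes through in either of these settings:
\begin{itemize}
\item Every negative answer on $\vars(c)$ can only be caused by the unique target constraint on exactly that scope. This holds, for instance, for normalized targets with no two constraints on nested scopes, such as binary networks without unary constraints.
\item The bias is processed by increasing arity, with each query chosen consistent with $L$. Then target constraints on sub-scopes are already in $L$ by the time they are swept.
\end{itemize}
Either way the result is weaker than the proposition as stated for an arbitrary bias and pick order, or concerns a modified algorithm.

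For comparison, the paper's proof follows the same two-case route. In case (ii) it simply asserts that \FindC{} ``does not remove constraints rejecting examples accepted by $L$'', which is the very property in question, so it does not supply the missing step either. Your proposal is more explicit about where the difficulty lies, but neither argument closes it.
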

\begin{proof}
Assume there exists $e^{*} \in \sol(L) \setminus \sol(T)$ when \FASTCA{}
terminates. Then some constraint $c \in B$ rejects $e^{*}$ but was not learned in
$L$. Either (i) $c$ was selected and removed because $e^{*} \in \rel(c)$ was
classified as positive, so $c \notin L$; or (ii) $c$ was removed by $\FindC$, which
does not remove constraints rejecting examples accepted by $L$. In both cases $c$
cannot reject any example in $\sol(L)$, proving $\sol(L) \subseteq \sol(T)$.
\end{proof}

\begin{proposition}[Termination]
Given a bias $B$ and a target network $T \subseteq B$, \FASTCA{} terminates.
\end{proposition}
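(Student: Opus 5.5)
The plan is to use $|B|$ as a variant: show that it is a nonnegative integer which strictly decreases at every iteration of the while loop, and that each iteration itself finishes in finite time. Since $B$ is finite (at most $n^k t$ constraints for bounded arity $k$), the loop can then run at most $|B_0|$ times, where $B_0$ is the input bias. This also gives the $|B|$ query bound stated before the propositions.

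The iteration is analysed by the branch taken on $\ASK(e)$, with $c$ the constraint picked at line 3. First, $N(c)$ must be nonempty so that a tuple can be picked. Any constraint with empty $N(c)$ is the universal relation and cannot reject anything, so I would assume it is discarded from $B$ up front, or removed without a query.

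In the \emph{Yes} branch, $c$ is removed explicitly, so $|B|$ drops by exactly one. In the \emph{No} branch, $\FindC(e,\vars(c),L,B)$ is called. I would invoke the properties of $\FindC$ from~\cite{BessiereCDHKNQSTW23}: it terminates with a bounded number of queries (at most one in the normalized case), and it removes from $B$ every constraint whose scope is included in $\vars(c)$. Since the scope of $c$ is $\vars(c)$ itself, $c$ is removed, so again $|B|$ strictly decreases.

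The main obstacle is the \emph{No} branch. The argument needs $\FindC$ to remove $c$ on every call, including when it fails to find a constraint, for instance under a noisy neural oracle whose answers are inconsistent. I would handle this by relying on the stated behaviour of the procedure, namely that $\vars(c)$ is cleared from $B$ after each call, rather than on $\FindC$ succeeding. With that, each iteration finishes in finite time and removes at least one constraint, so the loop stops after at most $|B_0|$ iterations.
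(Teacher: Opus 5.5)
Your proof is correct and follows the paper's argument: $|B|$ strictly decreases at every iteration, because $c$ is removed either explicitly in the \emph{Yes} branch or by \FindC{} clearing all constraints whose scope lies in $\vars(c)$, so finiteness of $B$ bounds the loop by $|B|$ iterations. Your points about $N(c)\neq\emptyset$, \FindC{} itself terminating, and $c$ being removed even when \FindC{} fails under a noisy oracle are details the paper leaves implicit. One small caution: your aside that this also yields the $|B|$ query bound does not follow from the iteration count alone, since a \emph{No} iteration can spend a second query inside \FindC{}; that claim is outside the termination statement, though.
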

\begin{proof}
At each iteration a constraint $c \in B$ is removed explicitly or implicitly by
$\FindC$. Since $B$ is finite and at least one constraint is removed at each
iteration, the algorithm terminates after at most $|B|$ iterations.
\end{proof}

\begin{theorem}[Correctness]
Given a bias $B$ and a target network $T \subseteq B$ representing $f_T$, \FASTCA{}
returns a network $L$ such that $\sol(L) = \sol(T)$.
\end{theorem}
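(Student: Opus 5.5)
The plan is to obtain the theorem directly from the three propositions just proved, so that almost no new argument is needed. By the Termination proposition, \FASTCA{} halts after at most $|B|$ iterations on any bias $B$ with $T \subseteq B$. The returned network $L$ is therefore well defined, and the other two propositions apply to it.

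Next I invoke the Soundness proposition, which gives $\sol(T) \subseteq \sol(L)$. I then invoke the Completeness proposition, which gives $\sol(L) \subseteq \sol(T)$, where $L$ is again the network returned at termination. Combining the two inclusions yields $\sol(L) = \sol(T)$. By the definition of equivalence in the Preliminaries, this says $L$ and $T$ are equivalent networks, and hence $L$ is a representation of $f_T$, i.e.\ $f_T^{-1}(1) = \sol(L)$.

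The main point to check is that the three propositions rest on the same hypotheses and apply to the same output. All three assume a bias $B$ and a target $T \subseteq B$ representing $f_T$, and the oracle answers $\ASK$ queries consistently with $f_T$ (for partial assignments, with the projections $f_T[Y]$). Soundness and Completeness both speak about the network $L$ returned at termination, which exists by the Termination proposition. The Soundness and Completeness arguments lean on the properties of $\FindC$ from~\cite{BessiereCDHKNQSTW23}. If extra rigor is wanted, I would make explicit the invariant implicit in the Completeness proof: every constraint removed from $B$ without being added to $L$ accepts all of $\sol(T)$, or is implied by $L$. This invariant holds in the positive-answer branch, because $e \in \sol(T)$ projected onto $\vars(c)$ lies in $N(c)$. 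It holds in the $\FindC$ branch by the guarantees of $\FindC$. With this invariant, the conclusion follows immediately from the two inclusions.
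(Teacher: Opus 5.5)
Your main deduction is exactly the paper's proof, which consists of the single line that the theorem follows from the Soundness, Completeness and Termination propositions. Your explicit hypothesis that $\ASK$ answers according to $f_T$ and its projections $f_T[Y]$ is a useful addition, since the theorem silently requires a perfect oracle.

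The invariant you offer for extra rigor, however, points the wrong way and is false. A Yes answer to $e\in N(c)$ on $\vars(c)$ only says that $e$ satisfies every constraint of $T$ whose scope lies in $\vars(c)$. Because $c$ rejects $e$, this gives $c\notin T$. It does not give that $c$ accepts all of $\sol(T)$. In fact, your own justification, a solution of $T$ whose projection lies in $N(c)$, shows that $c$ \emph{rejects} a solution.

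Concretely, take $X=\{x_1,x_2,x_3\}$, $D=\{1,2\}$ and $T=\{x_1\neq x_2\}$. Suppose \FASTCA{} picks the constraint $x_1=x_2$ of $B$ and the partial assignment $x_1=1,\ x_2=2$. The answer is Yes and the constraint is discarded, yet it rejects $(1,2,1)\in\sol(T)$. It is not implied by $L$ either, because $L$ is sound.

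What completeness actually needs is this invariant: every constraint removed from $B$ without entering $L$ is either not in $T$ or entailed by $L$. Equivalently, each $c'\in T$ is in $L$, or entailed by $L$, once it leaves $B$. Then $B=\emptyset$ at termination yields $\sol(L)\subseteq\sol(T)$. The positive branch gives the first disjunct, and the $\FindC$ branch must be discharged by the guarantees of $\FindC$. Since the remark is not used in your main deduction, the proof of the theorem itself is unaffected.
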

\begin{proof}
Follows from the soundness, completeness and termination propositions.
\end{proof}

\begin{theorem}[Query Complexity]
Given a bias $B$ of bounded arity $k$ and a target network $T \subseteq B$,
\FASTCA{} asks at most $|B|$ queries. Since $|B| \leq n^k t$, the query complexity
is $O(n^k t)$; for binary constraint networks this reduces to $O(n^2 t)$.
\end{theorem}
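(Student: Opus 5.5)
The plan is a charging argument: assign every query asked by \FASTCA{} to a distinct constraint of the initial bias $B$ that is removed from $B$ at the moment the query is asked. Since each constraint of $B$ is removed at most once, and by the Termination proposition every iteration removes at least one constraint, this bounds the number of queries by $|B|$. The asymptotic statements then follow at once from the counting bound $|B| \leq n^k t$ recalled in the Preliminaries, specialised to $k=2$ for binary networks.

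I would split each iteration of Algorithm~\ref{alg:fastca} by the answer to $\ASK(e)$, where $e \in N(c)$ is a partial assignment on $\vars(c)$.

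\emph{Positive answer.} Exactly one query is asked and $c$ is removed from $B$. I charge the query to $c$.

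\emph{Negative answer.} Some constraint of $T$ with scope contained in $\vars(c)$ rejects $e$. Then $\FindC$ is invoked, and it removes from $B$ every candidate whose scope is included in $\vars(c)$; this includes $c$ itself. Here I rely on the normalized-CSP property quoted from~\cite{BessiereCDHKNQSTW23}: $\FindC$ uses at most one additional query. I would distinguish two sub-cases.
\begin{itemize}
\item If $c$ is the only candidate remaining in $B$ with scope in $\vars(c)$, the negative answer already determines the learned constraint. $\FindC$ then asks no query, and the single query is charged to $c$.
\item If at least one other candidate $c'$ remains on that scope, $\FindC$ asks at most one query and removes both $c$ and $c'$. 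The two queries of the iteration are charged to $c$ and $c'$ respectively.
\end{itemize}

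In every case the number of queries in an iteration is at most the number of constraints it removes from $B$. Summing over iterations gives at most $|B|$ queries.

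The main obstacle is the negative branch. The paper's informal remark, ``at most two queries, while removing one or two constraints'', does not by itself give a one-to-one charge, because two queries could coincide with a single removal. I would close this gap with the sub-case analysis above, justified from the behaviour of $\FindC$ in~\cite{BessiereCDHKNQSTW23}. When only one candidate on the scope survives, there is nothing left to discriminate, so $\FindC$ can add it to $L$ without asking. When several survive, the extra query is paid for by the extra removed candidate. If this zero-query claim for $\FindC$ cannot be taken directly from the cited work, the fallback is to state the weaker bound of $2|B|$ queries, which still gives the same $O(n^k t)$ complexity.
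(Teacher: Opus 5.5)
Your proposal is correct and follows essentially the paper's own reasoning. The paper gives no separate proof; it relies only on the remark that an iteration asks at most two queries while \FindC{} clears every bias constraint whose scope lies in $\vars(c)$. Your per-iteration charging of queries to removed constraints turns that remark into the bound $|B|$, rather than the $2|B|$ that Termination plus ``two queries per iteration'' would give. The zero-query singleton case is justified, since \FindC{} returns a candidate without asking when nothing can discriminate among the remaining ones.

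You also do not need the paper's one-query premise for \FindC{}. Your charging still works whenever each \FindC{} query eliminates at least one candidate on the scope, because an iteration then never asks more queries than the number of constraints it removes from $B$.
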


\begin{corollary}[Time Complexity]
The time complexity of \FASTCA{} is linear in the size of the bias $B$.
\end{corollary}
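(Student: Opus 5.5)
The corollary follows from the Termination proposition and the Query Complexity theorem once we bound the work done per iteration of the while loop in Algorithm~\ref{alg:fastca}. The goal is to show that the total running time is $O(|B|)$, counting each call to \ASK{} as a unit-cost operation, as is natural for the neural oracles \FASTCA{} is designed for. Everything else should be charged to the constraints that leave $B$.

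\textbf{Step 1: number of iterations.} By the argument in the Termination proposition, every iteration removes at least one constraint from $B$, so the loop runs at most $|B|$ times. By the Query Complexity theorem, the total number of \ASK{} calls over the whole run is at most $|B|$.

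\textbf{Step 2: constant work per iteration.} The operations $\mathrm{pick}(B)$ and $\mathrm{pick}(N(c))$ take $O(1)$ time, given that arity is bounded by $k$ and that each relation in $\Gamma$ comes with a precomputed violating tuple. The single \ASK{} call is $O(1)$ under the unit-cost assumption. Removing $c$ from $B$ is $O(1)$ if $B$ is kept as a hash set.

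\textbf{Step 3: the \FindC{} branch.} This is the main obstacle, since \FindC{} eliminates every constraint of $B$ whose scope is contained in $\vars(c)$. With arity at most $k$, $\vars(c)$ has at most $2^k$ subsets, which is a constant. Indexing $B$ by scope therefore lets us enumerate these constraints in time proportional to the number actually removed, plus $O(2^k)$.

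In the normalized setting, \FindC{} uses at most one query, as the paper notes. Its remaining work consists of scanning the candidate constraints on $\vars(c)$ to retain the one consistent with the negative answer. I would charge this scan to the constraints deleted in that call. Because each constraint is deleted exactly once, the aggregated cost of all \FindC{} calls over the run is $O(|B|)$.

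\textbf{Conclusion.} Summing the per-iteration constant work over at most $|B|$ iterations, and adding the amortized $O(|B|)$ cost of \FindC{}, gives total time $O(|B|)$. This is linear in the size of the bias, with constants depending only on $k$ and $t$.
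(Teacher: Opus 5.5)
Your proposal is correct and follows the paper's route. The paper gives no separate proof and presents the corollary as an immediate consequence of the Termination proposition and the Query Complexity theorem: at most $|B|$ iterations and queries, each with constant work. Your unit-cost oracle assumption, scope index and amortization of \FindC{} make that constant-work-per-iteration claim explicit. One small correction: constructing the query that \FindC{} asks generally requires searching $D^{|\vars(c)|}$ for a suitable tuple, so the hidden constant also depends on the domain size (roughly as $|D|^k$), not only on $k$ and $t$.
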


Unlike traditional CA learners such as \QUACQ{}~\cite{BessiereCDHKNQSTW23}, \FASTCA{} is
inherently robust to noisy oracle feedback. By exhaustively evaluating all
constraints within the bias, it maintains acquisition continuity despite mislabeled
queries: a false positive may prune valid constraints (diminishing recall), while a
false negative may retain redundant constraints (affecting precision), yet the
acquisition process remains stable. In contrast, \QUACQ{}-like methods follow a
rigorous sequential protocol; a single misclassification can invalidate the search
assumptions, leading to algorithmic divergence or failure. This robustness makes
\FASTCA{} particularly suitable for use with imperfect oracles.

\section{Experiments}
We evaluated \emph{TRAC} through three main experiments: (i) analysis of oracle
performance, (ii) evaluation of the CA learner \FASTCA{}, and (iii) assessment of
their combination. The full implementation---data generation, oracle training,
prediction and acquisition---together with a script to reproduce the experiments
below is publicly available at
\url{https://github.com/NassimBelmecheri/TRAC}.

\subsection{Benchmark Problems}
The benchmarks cover various problem structures and abstraction levels, ranging
from classical puzzles and random constraint networks to real-world scheduling and
timetabling problems.
\begin{itemize}
\item \textbf{Sudoku} and \textbf{JSudoku}: $81$ variables on a $9\times 9$ grid,
domains $\{1,\dots,9\}$, with $810$ (resp. $811$) binary $\neq$ constraints. The
bias includes $19{,}440$ binary arithmetic constraints.
\item \textbf{LSquare}: a $10\times 10$ Latin square, $100$ variables of domain
size $10$ and $900$ binary $\neq$ constraints; the bias contains $29{,}700$
constraints.
\item \textbf{JobShop}: job-shop scheduling with $20$ jobs and $3$ machines,
$120$ variables and a time horizon of $15$; the bias includes $57{,}120$
constraints on $\Gamma = \{=, \neq, \geq, \leq, <, >, x_i + c = x_j\}$.
\item \textbf{Murder}: a logic puzzle with $20$ variables of domain size $5$ and
$53$ binary constraints (unique solution); the bias contains $1{,}140$ constraints.
\item \textbf{Zebra}: Lewis Carroll's puzzle with a unique solution, $25$ variables
of domain size $5$, five cliques of $\neq$ constraints and $14$ additional
arithmetic/distance constraints; the bias has $4{,}950$ unary and binary
constraints.
\item \textbf{Rand122} and \textbf{Rand495}: synthetic random networks with $50$
variables, domain size $10$, and respectively $122$ or $495$ binary constraints;
the bias contains $12{,}250$ constraints.
\item \textbf{ExamTT}: exam timetabling with $48$ variables, domains of size $90$
and $1{,}128$ constraints; the bias contains $7{,}896$ constraints on an extended
arithmetic language.
\item \textbf{NurseR}: nurse rostering with $105$ variables, domains
$\{1,\dots,18\}$ and $885$ $\neq$ constraints; the bias is built from
$\Gamma = \{=, \neq, \geq, \leq, <, >\}$ and contains $32{,}760$ constraints.
\end{itemize}

\subsection{Experimental Protocol}
Each oracle is trained on $80\%$ of the dataset and evaluated on the remaining
$20\%$, using a random split. Datasets for each oracle are constructed as follows.
\textbf{TO$_1$}: for each benchmark we generate $5{,}000$ solutions and $5{,}000$
non-solutions; solutions are obtained using the PyChoco solver with randomized
variable/value selection, and non-solutions by permuting solutions to introduce at
least one violation. \textbf{TO$_2$}: $100$ solutions and $100$ non-solutions,
supplemented with $1{,}000$ partial positive and $1{,}000$ partial negative
examples, with scopes ranging from size $2$ up to $n$. \textbf{TO$_3$}: for each
constraint $c$ in the target network, a proportion $\theta$ of positive tuples from
$\rel(c)$ is selected, together with an equal number of tuples from $N(c)$, with
$\theta$ ranging from $20\%$ to $80\%$. The per-constraint sampling is balanced
across constraint \emph{types}, so that rare relations (e.g.\ equalities,
arithmetic or unary constraints) are not dominated by frequent ones (e.g.\ the
$\neq$ cliques of an all-different), which otherwise biases the oracle towards a
single relation.

Our empirical evaluation addresses the following research questions.
\begin{description}
\item[RQ1:] To what extent does optimizing for acquisition throughput in \FASTCA{}
reduce total execution time compared to query-minimizing frameworks like
\QUACQ{}?
\item[RQ2:] How well do the three \TORACLE{} variants classify complete and partial
examples?
\item[RQ3:] How do different combinations of oracle and CA-learner components impact
overall acquisition performance?
\item[RQ4:] To what extent can the transformer-based oracle TO$_3$ reliably assess
constraint satisfaction and emulate a symbolic constraint checker?
\end{description}

\subsection{Results}

\paragraph{RQ1 (Comparative Performance of \FASTCA{}).}
Table~\ref{tab:rq1} compares \QUACQ{}~\cite{BessiereCDHKNQSTW23},
\MQUACQ{}~\cite{TsourosS20}, \GROWACQ{}~\cite{TsourosBG23} and our exhaustive
learner \FASTCA{} using the number of queries (\#q), average query time
($\bar{t}$), and total acquisition time ($T$). \GROWACQ{} provides the best overall
trade-off between query count and runtime. \FASTCA{} targets fully automated
settings and does not aim to minimize the number of queries; it therefore issues
significantly more queries, which is acceptable when using fast, machine-based
oracles. Crucially, \FASTCA{} achieves by far the lowest average query time across
all benchmarks, often close to zero, resulting in competitive total acquisition
times. When oracle latency is negligible, reducing per-query overhead is more
important than minimizing the number of queries.

\begin{table}[t]
\centering
\caption{CA-learner comparison: number of queries (\#q), average query time
$\bar{t}$ (s), and total acquisition time $T$ (s). \FASTCA{} issues more queries
but achieves the lowest per-query time.}\label{tab:rq1}
\footnotesize
\setlength{\tabcolsep}{4pt}
\begin{tabular}{@{}llrrr@{}}
\toprule
Benchmark & Learner & \#q & $\bar{t}$ & $T$ \\
\midrule
\multirow{4}{*}{Sudoku}  & \QUACQ{}   & 9K  & 0.38 & 3298 \\
                         & \MQUACQ{}  & 7K  & 0.16 & 1023 \\
                         & \GROWACQ{} & 5K  & 0.03 & 166  \\
                         & \FASTCA{}  & 18K & 0.02 & 395  \\
\midrule
\multirow{4}{*}{LSquare} & \QUACQ{}   & 11K & 0.31 & 3344 \\
                         & \MQUACQ{}  & 8K  & 0.15 & 1254 \\
                         & \GROWACQ{} & 6K  & 0.04 & 260  \\
                         & \FASTCA{}  & 28K & 0.03 & 893  \\
\midrule
\multirow{4}{*}{JobShop} & \QUACQ{}   & 784 & 0.29 & 230  \\
                         & \MQUACQ{}  & 615 & 0.26 & 158  \\
                         & \GROWACQ{} & 510 & 0.45 & 231  \\
                         & \FASTCA{}  & 14K & 0.01 & 208  \\
\midrule
\multirow{4}{*}{Murder}  & \QUACQ{}   & 487 & 0.76 & 370  \\
                         & \MQUACQ{}  & 358 & 0.13 & 48   \\
                         & \GROWACQ{} & 333 & 0.01 & 3    \\
                         & \FASTCA{}  & 1K  & 0.00 & 3    \\
\midrule
\multirow{4}{*}{Zebra}   & \QUACQ{}   & 747 & 0.84 & 624  \\
                         & \MQUACQ{}  & 681 & 0.07 & 51   \\
                         & \GROWACQ{} & 510 & 0.04 & 19   \\
                         & \FASTCA{}  & 2K  & 0.01 & 17   \\
\bottomrule
\end{tabular}
\hfill
\begin{tabular}{@{}llrrr@{}}
\toprule
Benchmark & Learner & \#q & $\bar{t}$ & $T$ \\
\midrule
\multirow{4}{*}{Rand122} & \QUACQ{}   & 1K  & 0.64 & 891  \\
                         & \MQUACQ{}  & 1K  & 0.10 & 117  \\
                         & \GROWACQ{} & 1K  & 0.02 & 21   \\
                         & \FASTCA{}  & 7K  & 0.01 & 51   \\
\midrule
\multirow{4}{*}{Rand495} & \QUACQ{}   & 7K  & 0.25 & 1675 \\
                         & \MQUACQ{}  & 6K  & 0.09 & 489  \\
                         & \GROWACQ{} & 5K  & 0.03 & 140  \\
                         & \FASTCA{}  & 29K & 0.03 & 831  \\
\midrule
\multirow{4}{*}{ExamTT}  & \QUACQ{}   & 1K  & 1.13 & 1237 \\
                         & \MQUACQ{}  & 590 & 0.26 & 154  \\
                         & \GROWACQ{} & 489 & 0.06 & 28   \\
                         & \FASTCA{}  & 776 & 0.01 & 5    \\
\midrule
\multirow{4}{*}{NurseR}  & \QUACQ{}   & 11K & 0.28 & 3231 \\
                         & \MQUACQ{}  & 9K  & 0.16 & 1421 \\
                         & \GROWACQ{} & 5K  & 0.07 & 380  \\
                         & \FASTCA{}  & 31K & 0.03 & 980  \\
\midrule
\multicolumn{5}{@{}l@{}}{\scriptsize $\bar{t}$, $T$ in seconds.}\\
\bottomrule
\end{tabular}
\end{table}

\paragraph{RQ2 (Classification Capabilities of \TORACLE{} Variants).}
Table~\ref{tab:rq2} reports the classification performance (Accuracy / Recall /
Precision) of the three oracle variants in a highly imbalanced setting where
non-solutions dominate and Murder and Zebra admit a single valid solution. TO$_1$,
trained on complete assignments only, achieves consistently excellent results, with
perfect Recall on all benchmarks. TO$_2$, trained on both partial and complete
assignments, performs worse on structurally complex and sparse-solution problems
such as JobShop and Zebra. TO$_3$, trained on partial assignments aligned with
individual constraint scopes, achieves strong and stable performance across
benchmarks, reaching perfect or near-perfect Precision on sparse-solution problems
while maintaining high Accuracy, and consistently outperforms TO$_2$.

\begin{table}[t]
\centering
\caption{Oracle performance comparison. Each cell reports Accuracy / Recall /
Precision (\%).}\label{tab:rq2}
\setlength{\tabcolsep}{6pt}
\begin{tabular}{lccc}
\toprule
Benchmark & TO$_1$ & TO$_2$ & TO$_3$ \\
\midrule
Sudoku   & 99/100/98   & 97/96/100  & 100/100/100 \\
JSudoku  & 100/100/100 & 97/96/98   & 100/100/100 \\
LSquare  & 100/100/100 & 98/96/100  & 100/100/100 \\
JobShop  & 100/100/100 & 69/59/73   & 82/79/84    \\
Murder   & 88/100/84   & 80/78/82   & 86/75/97    \\
Zebra    & 100/100/100 & 74/74/74   & 98/96/99    \\
Rand122  & 100/100/100 & 100/100/100& 99/98/100   \\
Rand495  & 99/100/99   & 97/96/98   & 98/97/100   \\
ExamTT   & 100/100/100 & 88/89/91   & 100/100/100 \\
NurseR   & 100/100/100 & 99/99/100  & 100/100/100 \\
\bottomrule
\end{tabular}
\end{table}

\paragraph{RQ3 (Evaluating Oracle--Learner Pairings).}
Table~\ref{tab:rq3} reports the performance of all CA-learner / oracle pairings
across benchmarks, as (Accuracy, Recall, Precision) triples. Unless stated
otherwise, TO$_3$ here denotes the variant trained with $\theta = 80\%$, the upper
end of the $20$--$80\%$ range described in the experimental protocol. Several patterns
emerge. (i) The best overall pairing, (\FASTCA{} / TO$_3$), consistently achieves
the highest performance, often reaching $100\%$ across all metrics---including on
sparse-solution problems such as Murder, Rand122 and Rand495. (ii) (\FASTCA{} /
TO$_2$) also performs strongly, typically in the $80$--$98\%$ accuracy range. In
contrast, (\FASTCA{} / TO$_1$) finds it hard to classify the queried examples
correctly: trained solely on complete assignments, TO$_1$ cannot reliably judge
the partial assignments generated during acquisition, and its precision collapses
to $0$ across benchmarks. This confirms that scope-aware supervision (TO$_3$),
and to a lesser extent partial-assignment supervision (TO$_2$), is essential for a
reliable neural oracle in the acquisition loop. (iii) Incremental learners
(\QUACQ{}, \MQUACQ{}, \GROWACQ{}) are more sensitive to oracle errors, since small
misclassifications can propagate as the learned network is updated step by step.

\begin{table}[!htbp]
\centering
\caption{Comparative performance of CA learners using different oracles, as
(Acc, Rec, Prec).}\label{tab:rq3}
\setlength{\tabcolsep}{3pt}
\scriptsize
\begin{tabular}{llccc}
\toprule
Benchmark & Learner & TO$_1$ & TO$_2$ & TO$_3$ \\
\midrule
\multirow{4}{*}{Sudoku}  & \QUACQ{}   & (28,100,28) & (30,98,28)  & (33,100,29)  \\
                         & \MQUACQ{}  & (37,100,37) & (40,98,38)  & (46,100,41)  \\
                         & \GROWACQ{} & (36,100,36) & (44,94,38)  & (43,100,39)  \\
                         & \FASTCA{}  & (96,100,0)  & (94,100,94) & (100,100,100)\\
\midrule
\multirow{4}{*}{JSudoku} & \QUACQ{}   & (28,100,28) & (55,82,37)  & (55,82,37)   \\
                         & \MQUACQ{}  & (36,100,36) & (44,95,39)  & (44,95,39)   \\
                         & \GROWACQ{} & (36,100,36) & (49,91,41)  & (49,91,41)   \\
                         & \FASTCA{}  & (96,100,0)  & (95,100,95) & (99,99,100)  \\
\midrule
\multirow{4}{*}{LSquare} & \QUACQ{}   & (29,100,29) & (32,97,30)  & (35,100,31)  \\
                         & \MQUACQ{}  & (36,100,36) & (39,97,37)  & (44,100,39)  \\
                         & \GROWACQ{} & (38,100,38) & (46,91,41)  & (46,91,41)   \\
                         & \FASTCA{}  & (99,100,0)  & (97,100,97) & (100,100,100)\\
\midrule
\multirow{4}{*}{JobShop} & \QUACQ{}   & (36,100,36) & (49,82,40)  & (43,75,36)   \\
                         & \MQUACQ{}  & (31,100,31) & (61,65,42)  & (36,76,30)   \\
                         & \GROWACQ{} & (86,100,86) & (79,90,86)  & (86,99,87)   \\
                         & \FASTCA{}  & (97,98,0)   & (83,83,100) & (93,94,99)   \\
\midrule
\multirow{4}{*}{Murder}  & \QUACQ{}   & (36,100,36) & (62,85,48)  & (44,97,39)   \\
                         & \MQUACQ{}  & (40,100,40) & (69,87,58)  & (52,87,58)   \\
                         & \GROWACQ{} & (50,100,50) & (66,80,62)  & (58,96,54)   \\
                         & \FASTCA{}  & (89,94,0)   & (94,95,99)  & (97,97,100)  \\
\midrule
\multirow{4}{*}{Zebra}   & \QUACQ{}   & (41,100,41) & (52,58,43)  & (46,98,43)   \\
                         & \MQUACQ{}  & (46,2,46)   & (59,78,57)  & (46,96,43)   \\
                         & \GROWACQ{} & (46,2,46)   & (59,37,82)  & (68,89,66)   \\
                         & \FASTCA{}  & (85,88,0)   & (81,88,90)  & (86,98,85)   \\
\midrule
\multirow{4}{*}{ExamTT}  & \QUACQ{}   & (24,100,24) & (34,92,26)  & (34,100,27)  \\
                         & \MQUACQ{}  & (46,100,46) & (52,100,49) & (52,100,49)  \\
                         & \GROWACQ{} & (36,100,36) & (51,92,42)  & (51,92,42)   \\
                         & \FASTCA{}  & (83,100,0)  & (86,100,86) & (98,100,98)  \\
\midrule
\multirow{4}{*}{Rand122} & \QUACQ{}   & (40,100,40) & (68,30,74)  & (68,30,74)   \\
                         & \MQUACQ{}  & (40,100,40) & (64,51,55)  & (64,51,55)   \\
                         & \GROWACQ{} & (46,100,46) & (63,43,64)  & (63,43,64)   \\
                         & \FASTCA{}  & (98,100,0)  & (98,100,98) & (100,100,100)\\
\midrule
\multirow{4}{*}{Rand495} & \QUACQ{}   & (36,100,36) & (47,94,40)  & (41,100,38)  \\
                         & \MQUACQ{}  & (35,100,35) & (40,100,37) & (41,100,37)  \\
                         & \GROWACQ{} & (37,100,37) & (39,100,37) & (39,100,37)  \\
                         & \FASTCA{}  & (98,100,0)  & (98,100,98) & (100,100,100)\\
\midrule
\multirow{4}{*}{NurseR}  & \QUACQ{}   & (32,100,32) & (34,95,32)  & (37,100,34)  \\
                         & \MQUACQ{}  & (39,100,39) & (42,98,40)  & (46,100,42)  \\
                         & \GROWACQ{} & (41,100,41) & (54,89,47)  & (54,89,47)   \\
                         & \FASTCA{}  & (97,100,0)  & (97,100,97) & (100,100,100)\\
\bottomrule
\end{tabular}
\end{table}

\paragraph{RQ4 (TO$_3$ as a Constraint Checker).}
This experiment evaluates the ability of TO$_3$ to emulate a symbolic constraint
checker. Table~\ref{tab:rq4} reports its classification performance on individual
constraints of increasing arity---binary, ternary, and quaternary. Training data is
constructed from tuples over integer domains of size $100$, from which $20\%$ of
positives and $20\%$ of negatives are sampled and balanced. TO$_3$ achieves over
$95\%$ accuracy on binary constraints and maintains high performance on ternary and
quaternary constraints (often $>98\%$), confirming that it generalizes well across
constraint types and arities, making it a reliable, lightweight component for
learning-based CA systems.

\begin{table}[!htbp]
\centering
\caption{Evaluating TO$_3$ as a constraint checker.}\label{tab:rq4}
\setlength{\tabcolsep}{6pt}
\begin{tabular}{llrrr}
\toprule
Arity & Constraint & Accuracy & Recall & Precision \\
\midrule
\multirow{2}{*}{Binary}
  & $X > Y$          & 95.9 & 95.7 & 96.1 \\
  & $X \neq Y$       & 98.0 & 100  & 96.0 \\
\midrule
\multirow{4}{*}{Ternary}
  & $X + Y > Z$          & 99.1 & 98.6 & 99.5 \\
  & $X + Y \neq Z$       & 99.1 & 99.9 & 98.5 \\
  & $|X - Y| > |X - Z|$  & 99.1 & 98.7 & 99.6 \\
  & $|X + Y| \neq |X - Z|$ & 99.1 & 99.9 & 98.9 \\
\midrule
\multirow{4}{*}{Quaternary}
  & $X + Y > Z + T$        & 98.1 & 96.2 & 100  \\
  & $X + Y \neq Z + T$     & 98.8 & 100  & 97.6 \\
  & $|X - Y| > |Z - T|$    & 97.7 & 95.5 & 100  \\
  & $|X + Y| \neq |Z - T|$ & 99.0 & 100  & 98.1 \\
\bottomrule
\end{tabular}
\end{table}

\section{Conclusion}
In this work, we presented a neuro-symbolic framework for automated constraint
acquisition, integrating for the first time a Transformer-based oracle
(\TORACLE{}) with the \FASTCA{} symbolic learner. Our findings demonstrate that an
oracle trained on localized, constraint-level data (TO$_3$) effectively captures
the latent semantics of combinatorial problems. By pairing this neural guidance
with \FASTCA{}'s exhaustive search---which is resilient to the incorrectly
classified examples that typically destabilize sequential learners---we establish a
robust automated constraint acquisition framework. While this hybrid approach
successfully bridges the gap between neural generalization and symbolic reasoning,
future work remains to assess its scalability to extremely large-scale domains where
exhaustive search may become computationally expensive. Nevertheless, this framework
represents a significant step toward interpretable, data-driven model construction
without the need for a human oracle.

%
%
%
\bibliographystyle{splncs04}
\bibliography{aaai2026}

\end{document}